\newtheorem{lemma}{Lemma}
\newtheorem{theorem}{Theorem}
\newtheorem{corollary}{Corollary}
\title{Dynamic metric learning from pairwise
comparisons}
            \author{
  Kristjan Greenewald  \\
  EECS Department\\
  University of Michigan\\
  Ann Arbor, MI 48109 \\
  \texttt{greenewk@umich.edu} \\
  \And
  Stephen Kelley \\
  MIT Lincoln Laboratory \\
  Lexington, MA 02420 \\
  \texttt{stephen.kelley@ll.mit.edu} \\
  \And
  Alfred O. Hero III \\
 EECS Department\\
  University of Michigan\\
  Ann Arbor, MI 48109 \\
  \texttt{hero@umich.edu} \\
  }
\begin{document}
\author{Kristjan~Greenewald,~\IEEEmembership{Student Member,~IEEE,}~Stephen~Kelley, and~Alfred O.~Hero III,~\IEEEmembership{Fellow,~IEEE}

\thanks{K. Greenewald and A. Hero III are with the Department
of Electrical Engineering and Computer Science, University of Michigan, Ann Arbor,
MI, USA. This work was partially supported by US Army Research Office grant W911NF-15-1-0479. }
}
  \maketitle
\begin{abstract}
Recent work in distance metric learning has focused on learning transformations of data that best align with specified pairwise similarity and dissimilarity constraints, often supplied by a human observer.
The learned transformations lead to improved retrieval, classification, and clustering algorithms due to the better adapted distance or similarity measures. Here, we address the problem of learning these transformations when the underlying constraint generation process is nonstationary. This nonstationarity can be due to changes in either the ground-truth clustering used to generate constraints or changes in the feature subspaces in which the class structure is apparent. We propose Online Convex Ensemble StrongLy Adaptive Dynamic Learning (OCELAD), a general adaptive, online approach for learning and tracking optimal metrics as they change over time that is highly robust to a variety of nonstationary behaviors in the changing metric. We apply the OCELAD framework to an ensemble of online learners. Specifically, we create a retro-initialized composite objective mirror descent (COMID) ensemble (RICE) consisting of a set of parallel COMID learners with different learning rates, demonstrate RICE-OCELAD on both real and synthetic data sets and show significant performance improvements relative to previously proposed batch and online distance metric learning algorithms.


\end{abstract}

\section{Introduction}
%
%
%

\IEEEPARstart{T}{h}e effectiveness of many machine learning and data mining algorithms depends on an appropriate measure of pairwise distance between data points that accurately reflects the learning task, e.g., prediction, clustering or classification. The kNN classifier,  K-means clustering, and the Laplacian-SVM semi-supervised classifier are examples of such {\em distance-based} machine learning algorithms. In settings where there is clean, appropriately-scaled spherical Gaussian data, standard Euclidean distance can be utilized.  However, when the data is heavy tailed, multimodal, or contaminated by outliers, observation noise, or irrelevant or replicated features, use of Euclidean inter-point distance can be problematic, leading to bias or loss of discriminative power. 

To reduce bias and loss of discriminative power of distance-based machine learning algorithms, data-driven approaches for optimizing the distance metric have been proposed. These methodologies, generally taking the form of dimensionality reduction or data ``whitening", aim to utilize the data itself to learn a transformation of the data that embeds it into a space where Euclidean distance is appropriate. Examples of such techniques include Principal Component Analysis \cite{bishop2006pattern}, Multidimensional Scaling \cite{hastie2005elements}, covariance estimation \cite{hastie2005elements,bishop2006pattern}, and manifold learning \cite{lee2007nonlinear}. Such unsupervised methods do not exploit human input on the distance metric, and they overly rely on prior assumptions, e.g., local linearity or smoothness.


In distance metric learning one seeks to learn transformations of the data associated with a distance metric that is well matched to a particular task specified by the user. Point labels or constraints indicating point similarity or dissimilarity are used to learn a transformation of the data such that similar points are ``close" to one another and dissimilar points are distant in the transformed space.  Learning distance metrics in this manner allows a more precise notion of distance or similarity to be defined that is better related to the task at hand.

Many supervised and semi-supervised distance metric learning approaches have been developed \cite{kulis2012metric}. This includes online algorithms \cite{kunapuli2012mirror} with regret guarantees for situations where similarity constraints are received sequentially. 

This paper proposes a new distance metric tracking method that is applicable to the non-stationary time varying case of distance metric drift and has provably {\em strongly adaptive} tracking performance. 


Specifically, we suppose the underlying ground-truth (or optimal) distance metric from which constraints are generated is evolving over time, in an unknown and potentially nonstationary way. We propose a strongly adaptive, online approach to track the underlying metric as the constraints are received. We introduce a framework called Online Convex Ensemble StrongLy Adaptive Dynamic Learning (OCELAD), which at every time step evaluates the recent performance of and optimally combines the outputs of an ensemble of online learners, each operating under a different drift-rate assumption.   
We prove strong bounds on the dynamic regret of every subinterval, guaranteeing strong adaptivity and robustness to nonstationary metric drift such as discrete shifts, slow drift with a widely-varying drift rate, and all combinations thereof. Applying OCELAD to the problem of nonstationary metric learning, we find that it gives excellent robustness and low regret when subjected to all forms of nonstationarity.

\subsection{Related Work} \label{sec:related}


Linear Discriminant Analysis (LDA) and Principal Component Analysis (PCA) are classic examples of using linear transformations for projecting data into more interpretable low dimensional spaces.  Unsupervised PCA seeks to identify a set of axes that best explain the variance contained in the data. LDA takes a supervised approach, minimizing the intra-class variance and maximizing the inter-class variance given class labeled data points.

Much of the recent work in Distance Metric Learning has focused on learning Mahalanobis distances on the basis of pairwise similarity/dissimilarity constraints. These methods have the same goals as LDA; pairs of points labeled ``similar" should be close to one another while pairs labeled ``dissimilar" should be distant. MMC \cite{xing2002distance}, a method for identifying a Mahalanobis metric for clustering with side information, uses semidefinite programming to identify a metric that maximizes the sum of distances between points labeled with different classes subject to the constraint that the sum of distances between all points with similar labels be less than some constant.  

Large Margin Nearest Neighbor (LMNN) \cite{weinberger2005distance} similarly uses semidefinite programming to identify a Mahalanobis distance.  In this setting, the algorithm minimizes the sum of distances between a given point and its similarly labeled neighbors while forcing differently labeled neighbors outside of its neighborhood.  This method has been shown to be computationally efficient \cite{weinberger2008fast} and, in contrast to the similarly motivated Neighborhood Component Analysis \cite{goldberger2004neighbourhood}, is guaranteed to converge to a globally optimal solution.  
Information Theoretic Metric Learning (ITML) \cite{davis2007information} is another popular Distance Metric Learning technique. ITML minimizes the Kullback-Liebler divergence between an initial guess of the matrix that parameterizes the Mahalanobis distance and a solution that satisfies a set of constraints.  
For surveys of the vast metric learning literature, see \cite{kulis2012metric,bellet2013survey,yang2006distance}.

In a dynamic environment, it is necessary to track the changing metric at different times, computing a sequence of estimates of the metric, and to be able to compute those estimates online. Online learning \cite{cesa2006prediction} meets these criteria by efficiently updating the estimate every time a new data point is obtained, instead of solving an objective function formed from the entire dataset. Many online learning methods have regret guarantees, that is, the loss in performance relative to a batch method is provably small \cite{cesa2006prediction,duchi2010composite}. In practice, however, the performance of an online learning method is strongly influenced by the learning rate, which may need to vary over time in a dynamic environment \cite{daniely2015strongly,mcmahan2010,duchi2010}, especially one with changing drift rates. 

Adaptive online learning methods attempt to address the learning rate problem by continuously updating the learning rate as new observations become available. For learning static parameters, AdaGrad-style methods \cite{mcmahan2010,duchi2010} perform gradient descent steps with the step size adapted based on the magnitude of recent gradients. Follow the regularized leader (FTRL) type algorithms adapt the regularization to the observations \cite{mcmahan2014analysis}. Recently, a method called Strongly Adaptive Online Learning (SAOL) has been proposed for learning parameters undergoing $K$ discrete changes when the loss function is bounded between 0 and 1. SAOL maintains several learners with different learning rates and randomly selects the best one based on recent performance \cite{daniely2015strongly}. Several of these adaptive methods have provable regret bounds \cite{mcmahan2014analysis,herbster1998tracking,hazan2007adaptive}. These typically guarantee low total regret (i.e. regret from time 0 to time $T$) at every time \cite{mcmahan2014analysis}. SAOL, on the other hand, attempts to have low \emph{static} regret on every subinterval, as well as low regret overall \cite{daniely2015strongly}. This allows tracking of discrete changes, but not slow drift. Our work improves upon the capabilities of SAOL by allowing for unbounded loss functions, using a convex combination of the ensemble instead of simple random selection, and providing guaranteed low regret when all forms of nonstationarity occur, not just discrete shifts. All of these additional capabilities are shown in the results to be critical for good metric learning performance.   




The remainder of this paper is structured as follows. In Section \ref{sec:problem} we formalize the time varying distance metric tracking problem, and section \ref{Sec:COMIDLearn} presents the basic COMID online learner and our Retro-Initialized COMID Ensemble (RICE) of learners with dyadically scaled learning rates. 
Section \ref{Sec:SAOML} presents our OCELAD algorithm, a method of adaptively combining learners with different learning rates. Strongly adaptive bounds on the dynamic regret of OCELAD and RICE-OCELAD are presented in Section \ref{Sec:Bounds}, and results on both synthetic data and a text review dataset are presented in Section \ref{sec:results}. Section \ref{sec:conclusion} concludes the paper.

%
%
%
%



\section{Nonstationary Metric Learning} \label{sec:problem}

Metric learning seeks to learn a metric that encourages data points marked as similar to be close and data points marked as different to be far apart. The time-varying Mahalanobis distance at time $t$ is parameterized by $\mathbf M_t$ as
\begin{equation}
d_{M_t}^2(\mathbf{x},\mathbf{z}) = (\mathbf{x}-\mathbf{z})^T \mathbf M_t (\mathbf{x-z})
\end{equation}
where $\mathbf M_t \in \mathbb{R}^{n\times n} \succeq 0  $.

Suppose a temporal sequence of similarity constraints are given, where each constraint is the triplet $(\mathbf{x}_t,\mathbf z_t,y_t)$, $\mathbf x_t$ and $\mathbf z_t$ are data points in $\mathbb{R}^n$, and the label $y_t = +1$ if the points $\mathbf x_t, \mathbf z_t$ are similar at time $t$ and $y_t = -1$ if they are dissimilar. 

Following \cite{kunapuli2012mirror}, we introduce the following margin based constraints:
\begin{align}
\label{Eq:consts}
t | y_t = 1: \: d_{M_t}^2(\mathbf{x}_t,\mathbf{z}_t) \leq \mu-1;\\\nonumber
t | y_t = -1: \: d_{M_t}^2(\mathbf{x}_t,\mathbf{z}_t) \geq \mu +1,
\end{align}
where $\mu$ is a threshold that controls the margin between similar and dissimilar points. 
A diagram illustrating these constraints and their effect is shown in Figure \ref{Fig:Con}.
These constraints are softened by penalizing violation of the constraints with a convex loss function $\ell$.  This gives a loss function
\begin{align}
\label{Eq:Objective}
\mathcal{L}(\{\mathbf M_t,\mu\}) &=  \frac{1}{T} \sum_{t=1}^T\ell(y_t(\mu - \mathbf{u}_t^T \mathbf M_t \mathbf{u}_t)) + \rho r(\mathbf M_t) \\\nonumber &= \frac{1}{T} \sum_{t=1}^T f_t(\mathbf M_t,\mu) 
,
\end{align}
where $\mathbf{u}_t = \mathbf{x}_t - \mathbf{z}_t$, $r$ is the regularizer and $\rho$ the regularization parameter. Kunapuli and Shavlik \cite{kunapuli2012mirror} propose using nuclear norm regularization ($r(\mathbf M) = \|\mathbf M\|_*$) to encourage projection of the data onto a low dimensional subspace (feature selection/dimensionality reduction), and we have also had success with the elementwise L1 norm ($r(\mathbf M) = \|\mathrm{vec}(\mathbf M)\|_1$). In what follows, we develop an adaptive online method to minimize the loss subject to nonstationary smoothness constraints on the sequence of metric estimates $\mathbf M_t$.  
\begin{figure*}[htb]
\centering
\includegraphics[width=5.0in]{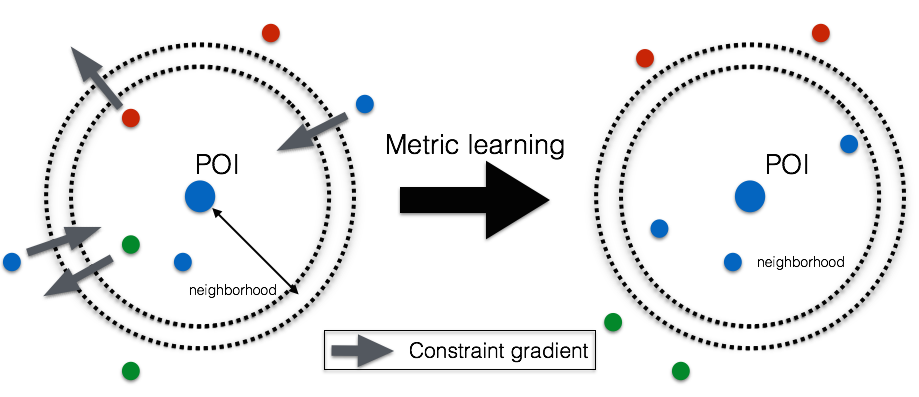}
\caption{Visualization of the margin based constraints \eqref{Eq:consts}, with colors indicating class. The goal of the metric learning constraints is to move target neighbors towards the point of interest (POI), while moving points from other classes away from the target neighborhood. }\label{Fig:Con}
\end{figure*}
\section{Retro-initialized COMID ensemble (RICE)}
\label{Sec:COMIDLearn}
Viewing the acquisition of new data points as stochastic realizations of the underlying distribution \cite{kunapuli2012mirror} suggests the use of composite objective stochastic mirror descent techniques (COMID). For convenience, we set $\ell_t({\mathbf M}_t,\mu_t) = \ell(y_t(\mu - \mathbf{u}_t^T \mathbf M_t \mathbf{u}_t))$.

For the loss \eqref{Eq:Objective} and learning rate $\eta_t$, COMID \cite{duchi2010composite} gives
\begin{align}
\label{Eq:COMID}
\hat{\mathbf M}_{t+1} =  &\arg \min_{\mathbf M \succeq 0} B_\psi(\mathbf M,\hat{\mathbf M}_t) \\\nonumber &+ \eta_t \langle \nabla_M \ell_t(\hat{\mathbf M}_t,\hat \mu_t), \mathbf M-\hat{\mathbf M}_t\rangle + \eta_t \rho \|\mathbf M\|_*\\\nonumber
\hat{\mu}_{t+1} =& \arg \min_{\mu \geq 1} B_\psi(\mu,\hat{\mu}_t) + \eta_t \nabla_\mu \ell_t(\hat{\mathbf M}_t, \hat{\mu}_t)'(\mu - \hat{\mu}_t),
\end{align}
where $B_\psi$ is any Bregman divergence. 
In \cite{kunapuli2012mirror} a closed-form algorithm for solving the minimization in \eqref{Eq:COMID} with $r(\mathbf M) = \|\mathbf M\|_*$ is developed for a variety of common losses and Bregman divergences, involving rank one updates and eigenvalue shrinkage. 

The output of COMID depends strongly on the choice of $\eta_t$. Critically, the optimal learning rate $\eta_t$ depends on the rate of change of $\mathbf{M}_t$ \cite{hall2015online}, and thus will need to change with time to adapt to nonstationary drift. 
Choosing an optimal sequence for $\eta_t$ is clearly not practical in an online setting with nonstationary drift, since the drift rate is changing. We thus propose to maintain an ensemble of learners with a range of $\eta_t$ values, whose output we will adaptively combine for optimal nonstationary performance. If the range of $\eta_t$ is diverse enough, one of the learners in the ensemble should have good performance on every interval. Critically, the optimal learner in the ensemble may vary widely with time, since the drift rate and hence the optimal learning rate changes in time. For example, if a large discrete change occurs, the fast learners are optimal at first, followed by increasingly slow learners as the estimate of the new value improves. In other words, the optimal approach is fast reaction followed by increasing refinement, in a manner consistent with the attractive $O(1/\sqrt{t})$ decay of the learning rate of optimal nonadaptive algorithms.


\begin{figure*}[htb]
\centering
\includegraphics[width=6in]{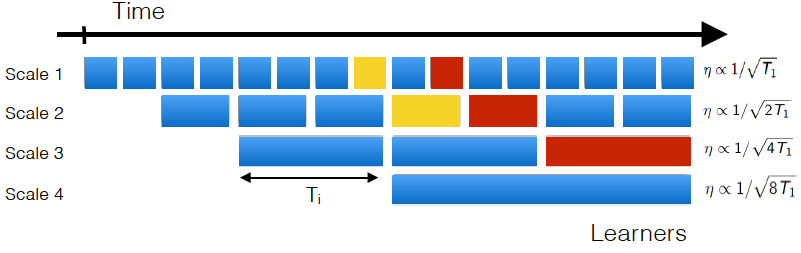}
\caption{Retro-initialized COMID ensemble (RICE). COMID learners at multiple scales run in parallel. Recent observed losses for each learner are used to create weights used to select the appropriate scale at each time. Each yellow and red learner is initialized by the output of the previous learner of the same color, that is, the learner of the next shorter scale.} 
\label{Fig:Backdate}\label{Fig:SAOL}
\end{figure*}

Define a set $\mathcal{I}$ of intervals $I = [t_{I1}, t_{I2}]$ such that the lengths $|I|$ of the intervals are proportional to powers of two, i.e. $|I| = I_0 2^j$, $j = 0, \dots$, with an arrangement that is a dyadic partition of the temporal axis, as in \cite{daniely2015strongly}. The first interval of length $|I|$ starts at $t=|I|$ (see Figure \ref{Fig:SAOL}), and additional intervals of length $|I|$ exist such that the rest of time is covered. 

Every interval $I$ is associated with a base COMID learner that operates on that interval. Each learner \eqref{Eq:COMID} has a constant learning rate proportional to the inverse square of the length of the interval, i.e. $\eta_t(I) = \eta_0/\sqrt{|I|}$. Each learner (besides the coarsest) at level $j$ ($|I| = I_0 2^j$) is initialized to the last estimate of the next coarsest learner (level $j-1$) (see Figure \ref{Fig:Backdate}). This strategy is equivalent to ``backdating" the interval learners so as to ensure appropriate convergence has occurred before the interval of interest is reached, and is effectively a quantized square root decay of the learning rate. We call our method of forming an ensemble of COMID learners on dyadically nested intervals the Retro-Initialized COMID Ensemble, or RICE, and summarize it in Figure \ref{Fig:SAOL}.


At a given time $t$, a set $\mathrm{ACT}(t) \subseteq \mathcal{I}$ of $\mathrm{floor}(\log_2 t)$ intervals/COMID learners are active, running in parallel. Because the metric being learned is changing with time, learners designed for low regret at different scales (drift rates) will have different performance (analogous to the classical bias/variance tradeoff). In other words, there is a scale $I_{opt}$ optimal at a given time.

To adaptively select and fuse the outputs of the ensemble, we introduce Online Convex Ensemble StrongLy Adaptive Dynamic Learning (OCELAD), a method that accepts an ensemble of black-box learners and uses recent history to select the optimal one at each time. 




\section{OCELAD} \label{sec:algorithm}
\label{Sec:SAOML}

To maintain generality, in this section we assume the series of random loss functions of the form $\ell_t(\theta_t)$ where $\theta_t$ is the time-varying unknown parameters. We assume that an ensemble $\mathcal{B}$ of online learners is provided on the dyadic interval set $\mathcal{I}$, each optimized for the appropriate scale. 
To select the appropriate scale, we compute weights $w_t(I)$ that are updated based on the learner's recent estimated regret. 
The weight update we use is inspired by the multiplicative weight (MW) literature \cite{blum2005external}, modified to allow for unbounded loss functions. At each step, we rescale the observed losses so they lie between -1 and 1, allowing for maximal selection ability and preventing negative weights. 
\begin{align}
\label{eq:estreg}
r_t(I) =&\left(\sum_I\frac{w_t(I)}{\sum_{I} w_t(I)}\ell_{t} (\theta_t(I))\right) - \ell_{t}(\theta_t(I))\\\nonumber
w_{t+1}(I) =& w_{t}(I) \left(1 + \eta_I \frac{r_{t}(I)}{\max_{I \in \mathrm{ACT}(t)} |r_{t}(I)|}\right), \quad \forall t \in I.
\end{align}
These hold for all $I \in \mathcal{I}$, where $\eta_I = \min\{1/2, 1/\sqrt{|I|}\}$, $\mathbf{M}_t(I),\mu_t(I)$ are the outputs at time $t$ of the learner on interval $I$, and $r_t(I)$ is called the estimated regret of the learner on interval $I$ at time $t$. The initial value of $w(I)$ is $\eta_I$. Essentially, this is highly weighting low loss learners and lowly weighting high loss learners. 



For any given time $t$, the outputs of the learners of interval $I \in \mathrm{ACT}(t)$ are combined to form the weighted ensemble estimate
\begin{align}
\label{Eq:Select}
\hat{\theta}_t  &= \frac{\sum_{I \in \mathrm{ACT}(t)}w_t(I) \theta_t(I)}{\sum_{I \in \mathrm{ACT}(t)} w_t(I)}
\end{align}
The weighted average of the ensemble is reasonable here due to our use of a convex loss function (proven in the next section), as opposed to the possibly non-convex losses of \cite{blum2005external}, necessitating a randomized selection approach. OCELAD is summarized in Algorithm 1, and the joint RICE-OCELAD approach as applied to metric learning of $\{\mathbf{M}_t, \mu_t\}$ is shown in Algorithm 2.

\begin{algorithm}[htb]
\caption{Online Convex Ensemble Strongly Adaptive Dynamic Learning (OCELAD)}\label{Alg:SAOML}
\begin{algorithmic}[1]
\STATE Provide dyadic ensemble of online learners $\mathcal{B}$.
\STATE Initialize weight: $w_1(I)$.
\FOR{$t = 1$ to $T$}
\STATE Observe loss function $\ell_{t}(\cdot)$ and update $\mathcal{B}$ ensemble. 
\STATE Obtain $|\mathrm{ACT}(t)|$ estimates $\theta_t(I)$ from the $\mathcal{B}$ ensemble.
\STATE Compute weighted ensemble average $\hat{\theta}_t$ via \eqref{Eq:Select} and set as estimate.
\STATE Update weights $w_{t+1}(I)$ via \eqref{eq:estreg}.
\ENDFOR
\STATE Return $\{\hat{\theta}_t\}$.
\end{algorithmic}
\end{algorithm}

\begin{algorithm}[htb]
\caption{RICE-OCELAD for Nonstationary Metric Learning }\label{Alg:SAOML}
\begin{algorithmic}[1]
\STATE Initialize weight: $w_1(I)$
\FOR{$t = 1$ to $T$}
\STATE Obtain constraint $(\mathbf{x}_{t},\mathbf{z}_{t},y_{t})$, compute loss function $\ell_{t,c}(\mathbf{M}_t,\mu_t)$.
\STATE Initialize new learner in RICE if needed. New learner at scale $j> 0$: initialize to the last estimate of learner at scale $j-1$.
\STATE COMID update $\mathbf M_t(I),\mu_t(I)$ using \eqref{Eq:COMID} for all active learners in RICE ensemble.
\STATE Compute 
\begin{align*}\hat{\mathbf{M}}_t  &\gets \frac{\sum_{I \in \mathrm{ACT}(t)}w_t(I) \mathbf{M}_t(I)}{\sum_{I \in \mathrm{ACT}(t)} w_t(I)}\\
\hat{\mu}_t  &\gets \frac{\sum_{I \in \mathrm{ACT}(t)}w_t(I) \mu_t(I)}{\sum_{I \in \mathrm{ACT}(t)} w_t(I)}.
\end{align*} 

\FOR{$I \in \mathrm{ACT}(t)$}
\STATE Compute estimated regret $r_t(I)$ and update weights according to \eqref{eq:estreg} with $\theta_t(I) = \{\mathbf{M}_t(I), \mu_t(I)\}$.
\ENDFOR
\ENDFOR
\STATE Return $\{\hat{\mathbf{M}}_t,\hat{\mu}_t\}$.
\end{algorithmic}
\end{algorithm}

\section{Strongly Adaptive Dynamic Regret}
\label{Sec:Bounds}




The standard static regret is defined as
 \begin{equation}
R_{\mathcal{B},static}(I) = \sum_{t\in I} f_t (\hat{\theta}_t) - \min_{\theta \in \Theta} \sum_{t \in I} f_t(\theta).
\end{equation}
where $f_t(\theta_t)$ is a loss with parameter $\theta_t$.
Since in our case the optimal parameter value $\theta_t$ is changing, the static regret of an algorithm $\mathcal{B}$ on an interval $I$ is not useful.
Instead, let $\mathbf w = \{\theta_t\}_{t \in [0,T]}$ be an arbitrary sequence of parameters. Then, the \emph{dynamic regret} of an algorithm $\mathcal{B}$ relative to a comparator sequence $\mathbf w$ on the interval $I$ is defined as
\begin{equation}
R_{\mathcal{B},\mathbf w} (I)= \sum_{t\in I} f_t(\hat{\theta}_t) -\sum_{t\in I} f_t (\theta_t),
\end{equation}
where $\hat{\theta}_t$ are generated by $\mathcal{B}$. This allows for a dynamically changing estimate. 

In \cite{hall2015online} the authors derive dynamic regret bounds that hold over all possible sequences $\mathbf w$ such that $\sum_{t\in I} \|\theta_{t+1} - \theta_t\| \leq \gamma$, i.e. bounding the total amount of variation in the estimated parameter. Without this temporal regularization, minimizing the loss would cause $\theta_t$ to grossly overfit. In this sense, setting the comparator sequence $\mathbf w$ to the ``ground truth sequence" or ``batch optimal sequence" both provide meaningful intuitive bounds. 


Strongly adaptive regret bounds \cite{daniely2015strongly} have claimed that static regret is low on every subinterval, instead of only low in the aggregate. 
We use the notion of dynamic regret to introduce strongly adaptive dynamic regret bounds, proving that \emph{dynamic regret is low on every subinterval $I \subseteq [0,T]$ simultaneously}. 
In a later work, we prove the following. 
Suppose there are a sequence of random loss functions $\ell_t(\theta_t)$. The goal is to estimate a sequence $\hat{\theta}_t$ that minimizes the dynamic regret.
\begin{theorem}
\label{Thm:SAOL}
Let $\mathbf{w} = \{\theta_1, \dots, \theta_T\}$ be an arbitrary sequence of parameters and define $\gamma_\mathbf{w}(I) = \sum_{q \leq t < s} \|\theta_{t+1} - \theta_t\|$ as a function of $\mathbf{w}$ and an interval $I = [q,s]$. Choose an ensemble of learners $\mathcal{B}$ such that given an interval $I$ the learner $\mathcal{B}_I$ creates an output sequence ${\theta}_t(I)$ satisfying the dynamic regret bound
\begin{equation}
\label{Eq:AlgCond}
R_{\mathcal{B}_I,\mathbf{w}}(I) \leq C (1 + \gamma_{\mathbf{w}}(I)) \sqrt{|I|}
\end{equation}
for some constant $C > 0$. Then the strongly adaptive dynamic learner ${OCELAD}^\mathcal{B}$ using $\mathcal{B}$ as the ensemble creates an estimation sequence $\hat{\theta}_t$ satisfying
\begin{align*}
\label{Eq:saRegret}
R_{OCELAD^{\mathcal{B}},\mathbf{w}}(I) \leq 8C (1 + \gamma_{\mathbf w}(I))\sqrt{| I|} + 40 \log (s + 1) \sqrt{|I|}
\end{align*}
on every interval $I = [q,s] \subseteq [0,T]$.
\end{theorem}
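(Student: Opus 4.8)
The plan is to argue in three parts, in the spirit of the strongly adaptive analysis of \cite{daniely2015strongly} but recast for \emph{dynamic} regret and for the convex-combination (rather than randomized) aggregation used by OCELAD. The overall decomposition is: (i) on any dyadic interval $J\in\mathcal I$, bound the regret of the aggregator relative to the single base learner $\mathcal B_J$ using convexity; (ii) add to this the hypothesized dynamic regret \eqref{Eq:AlgCond} of $\mathcal B_J$ itself; and (iii) pass from dyadic intervals to an arbitrary $I=[q,s]$ by a dyadic covering argument.

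First I would reduce to a ``meta-regret'' bound via convexity. Fix a dyadic $J\in\mathcal I$ and $t\in J$. Since $f_t$ is convex and $\hat\theta_t=\sum_{I\in\mathrm{ACT}(t)}p_t(I)\,\theta_t(I)$ with $p_t(I)=w_t(I)/\sum_{I'}w_t(I')$, Jensen's inequality gives $f_t(\hat\theta_t)\le\sum_I p_t(I)f_t(\theta_t(I))$, so that $f_t(\hat\theta_t)-f_t(\theta_t(J))\le r_t(J)$ by the definition of the estimated regret in \eqref{eq:estreg}. Summing over $t\in J$ and inserting $\pm f_t(\theta_t(J))$ yields $R_{OCELAD^{\mathcal{B}},\mathbf w}(J)\le\sum_{t\in J}r_t(J)+R_{\mathcal B_J,\mathbf w}(J)$, and the second term is at most $C(1+\gamma_{\mathbf w}(J))\sqrt{|J|}$ by \eqref{Eq:AlgCond}. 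The structural fact that replaces the randomized selection of \cite{daniely2015strongly} is the identity $\sum_I p_t(I)r_t(I)=0$: the weighted average of the estimated regrets vanishes at every step.

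Second I would establish the meta-regret bound $\sum_{t\in J}r_t(J)\le c_1\sqrt{|J|}\log(s+1)$ by a sleeping-experts multiplicative-weights estimate on the potential $W_t=\sum_{I\in\mathrm{ACT}(t)}w_t(I)$ over $t\in J$. Using $\eta_I=\min\{1/2,1/\sqrt{|I|}\}$, the rescaled per-round terms $r_t(I)/\max_{I'\in\mathrm{ACT}(t)}|r_t(I')|\in[-1,1]$, and the elementary inequalities $x-x^2\le\ln(1+x)\le x$ for $|x|\le1/2$, one gets a lower bound on $\ln w_{s+1}(J)$ of the form $\eta_J\big(\sum_{t\in J}\text{rescaled }r_t(J)\big)-\eta_J^2|J|$ and a matching upper bound on $\ln W_{s+1}-\ln W_q$ whose leading term cancels via the zero-mean identity (up to the cross-scale variation of the $\eta_I$); with the initialization $w_1(J)=\eta_J=1/\sqrt{|J|}$ this produces the claimed $O(\sqrt{|J|}\log(s+1))$ bound, and combining with part one gives $R_{OCELAD^{\mathcal{B}},\mathbf w}(J)\le c_1\sqrt{|J|}\log(s+1)+C(1+\gamma_{\mathbf w}(J))\sqrt{|J|}$ for every dyadic $J$.

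Third I would do the dyadic covering: any $I=[q,s]\subseteq[0,T]$ decomposes into consecutive intervals $J_1,\dots,J_m\in\mathcal I$ with $m=O(\log|I|)$ and, since the $|J_i|$ grow then decay geometrically, $\sum_i\sqrt{|J_i|}\le c_2\sqrt{|I|}$; moreover the variation is additive, $\sum_i\gamma_{\mathbf w}(J_i)\le\gamma_{\mathbf w}(I)$, and $|J_i|\le|I|$ gives $\sqrt{|J_i|}\,\gamma_{\mathbf w}(J_i)\le\sqrt{|I|}\,\gamma_{\mathbf w}(J_i)$. Summing the per-$J_i$ bound gives
\begin{equation*}
R_{OCELAD^{\mathcal{B}},\mathbf w}(I)\le c_1c_2\,\sqrt{|I|}\log(s+1)+C\sqrt{|I|}\big(c_2+\gamma_{\mathbf w}(I)\big),
\end{equation*}
and the dyadic cover can be chosen with geometric constant $c_2\le8$ and the multiplicative-weights constant kept to $c_1\le5$, yielding exactly $8C(1+\gamma_{\mathbf w}(I))\sqrt{|I|}+40\log(s+1)\sqrt{|I|}$. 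I expect the main obstacle to be the second step: making the sleeping-experts potential argument go through despite (i) the learning rates $\eta_I$ differing across the scales simultaneously active, which breaks the usual exact telescoping of $\ln W_t$, and (ii) the time-varying normalization $\max_{I'}|r_t(I')|$, which must be controlled so that the normalized meta-regret translates back to a genuine bound on $\sum_{t\in J}r_t(J)$; both are handled by exploiting the identity $\sum_I p_t(I)r_t(I)=0$ together with the geometric spacing of the active scales.
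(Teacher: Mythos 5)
Your proposal follows essentially the same route as the paper's proof: convexity/Jensen to bound OCELAD's loss against each active learner through the estimated regrets (with the weighted estimated regrets summing to zero/nonpositive), a multiplicative-weights potential argument with $\eta_I=\min\{1/2,|I|^{-1/2}\}$ and $\ln(1+x)\ge x-x^2$ giving the $O(\sqrt{|I|}\log(s+1))$ meta-regret on dyadic intervals, and the geometric dyadic partition of an arbitrary $[q,s]$ (constant $\tfrac{\sqrt2}{\sqrt2-1}\le 4$ per one-sided sequence) yielding the constants $8C$ and $40$. The one obstacle you flag --- the time-varying rescaling by $\max_{I'}|r_t(I')|$ --- is resolved in the paper exactly as you would need: the bounded-loss assumption gives a lower bound $\rho_t\ge c_\rho>0$ on the rescaling factor, so the normalized bound transfers to $\sum_t r_t(I)$.
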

In a dynamic setting, bounds of this type are particularly desirable because they allow for changing \emph{drift rate} and guarantee quick recovery from \emph{discrete changes}.
For instance, suppose $K$ discrete switches (large parameter changes or changes in drift rate) occur at times $t_i$ satisfying $0=t_0 < t_1< \dots< t_K=T$. Then since $\sum_{i = 1}^K \sqrt{|t_{i-1} - t_i|} \leq \sqrt{KT}$, this implies that the total expected dynamic regret on $[0,T]$ remains low ($O(\sqrt{KT})$), while simultaneously guaranteeing that an appropriate learning rate is achieved on each subinterval $[t_i, t_{i+1}]$. 

Now, reconsider the dynamic metric learning problem of Section II. It is reasonable to assume that the transformed distance between any two points is bounded, implying $\|\mathbf{M}\| \leq c'$ and that $\ell_t(\mathbf{M}_t,\mu_t) \leq k =  \ell(c' \max_{t} \|\mathbf{x}_t - \mathbf{z}_t\|_2^2)$. Thus the loss (and the gradient) are bounded. We can then show the COMID learners in the RICE ensemble have low dynamic regret. The proof of the following result is omitted for lack of space, and derives from a result in \cite{hall2015online}.
\begin{corollary}[Dynamic Regret: Metric Learning COMID]
\label{Cor:DynReg}
Let the sequence $\hat{\mathbf{M}}_t, \hat{\mu}_t$ be generated by \eqref{Eq:COMID}, and let $\mathbf{w} = \{\mathbf{M}_t\}_{t=1}^T$ be an arbitrary sequence with $\|\mathbf{M}_t\| \leq c$. Then using $\eta_{t+1} \leq \eta_t$ gives
\begin{equation}
R_{\mathbf{w}}([0,T]) \leq \frac{D_{max}}{\eta_{T+1}} + \frac{4\phi_{max}}{\eta_T} \gamma + \frac{G_\ell^2}{2\sigma} \sum_{t=1}^T \eta_t
\end{equation}
and setting $\eta_t = \eta_0/\sqrt{T}$,
\begin{align}
R_{\mathbf{w}}&([0, T]) \\\nonumber \leq& \sqrt{T}\left(\frac{D_{max} + 4 \phi_{max} ( \sum_t \|\mathbf{M}_{t+1} - \mathbf{M}_t\|_F)}{\eta_0}+ \frac{\eta_0 G_\ell^2}{2\sigma}\right)\nonumber\\ \label{Eq:DynBound}
=& O\left(\sqrt{T} \left[ 1 + \sum_{t  = 1}^T \|\mathbf{M}_{t+1} - \mathbf{M}_t\|_F\right]\right).
\end{align}

\end{corollary}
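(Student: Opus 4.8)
The plan is to treat the update \eqref{Eq:COMID} as one step of composite-objective (proximal) mirror descent with a time-varying step size, and to specialize the dynamic mirror descent analysis of Hall and Willett \cite{hall2015online} (with the trivial, identity dynamical model) to the present setting. First I would fix the problem constants. By the boundedness hypothesis the feasible set $\Theta = \{\mathbf M \succeq 0,\ \|\mathbf M\| \le c\} \times \{\mu \ge 1\}$ (with $\mu$ also bounded above, as forced in practice by the margin constraints \eqref{Eq:consts}) has finite Bregman diameter $D_{max} = \sup_{u,v \in \Theta} B_\psi(u,v)$; the mirror map $\psi$ is $\sigma$-strongly convex on $\Theta$ with respect to the working norm; and, since $\ell$ is convex while its argument $y_t(\mu - \mathbf u_t^T \mathbf M \mathbf u_t)$ is affine in $(\mathbf M,\mu)$ with $\|\mathbf u_t\|$ bounded, the subgradients of $\ell_t$ are uniformly bounded by a constant $G_\ell$. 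The quantity $\phi_{max}$ is the (finite, on the bounded set $\Theta$) Lipschitz-type constant controlling how much $B_\psi(\cdot,v)$ changes when its first argument is perturbed.

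Next comes the regret decomposition. Convexity of $f_t$ gives $f_t(\hat{\mathbf M}_t,\hat\mu_t) - f_t(\mathbf M_t,\mu_t) \le \langle \nabla\ell_t,\ (\hat{\mathbf M}_t,\hat\mu_t) - (\mathbf M_t,\mu_t)\rangle + \rho\bigl(\|\hat{\mathbf M}_t\|_* - \|\mathbf M_t\|_*\bigr)$, and the standard COMID step inequality \cite{duchi2010composite} bounds the right-hand side by
\[
\tfrac{1}{\eta_t}\bigl(B_\psi\bigl((\mathbf M_t,\mu_t),(\hat{\mathbf M}_t,\hat\mu_t)\bigr) - B_\psi\bigl((\mathbf M_t,\mu_t),(\hat{\mathbf M}_{t+1},\hat\mu_{t+1})\bigr)\bigr) + \tfrac{\eta_t}{2\sigma} G_\ell^2 ,
\]
the composite penalty $\rho\|\cdot\|_*$ being absorbed exactly because the prox step is solved in closed form (the eigenvalue-shrinkage update of \cite{kunapuli2012mirror}). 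Summing over $t \in [1,T]$, the last term contributes $\tfrac{G_\ell^2}{2\sigma}\sum_t\eta_t$; the telescoping Bregman sum is handled by Abel summation, in which the ``aligned'' differences cancel and there remain (i) a boundary term $\le D_{max}/\eta_1$, (ii) the coefficient-jump terms $\sum_t\bigl(\tfrac{1}{\eta_{t+1}} - \tfrac{1}{\eta_t}\bigr)B_\psi(\cdot)$, which are nonnegative precisely because $\eta_{t+1}\le\eta_t$ and hence sum to at most $D_{max}\bigl(\tfrac{1}{\eta_{T+1}} - \tfrac{1}{\eta_1}\bigr)$, and (iii) the comparator-shift terms $\sum_t\tfrac{1}{\eta_t}\bigl(B_\psi\bigl((\mathbf M_{t+1},\mu_{t+1}),\cdot\bigr) - B_\psi\bigl((\mathbf M_t,\mu_t),\cdot\bigr)\bigr)$, which are at most $\tfrac{4\phi_{max}}{\eta_T}\sum_t\|\mathbf M_{t+1} - \mathbf M_t\|$ with the absolute constant $4$ coming from the bookkeeping of the joint $(\mathbf M,\mu)$ iterate in \cite{hall2015online}. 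Combining (i)--(iii) with the gradient term and writing $\gamma = \gamma_{\mathbf w}([0,T]) = \sum_t\|\mathbf M_{t+1} - \mathbf M_t\|$ yields the first displayed bound, $R_{\mathbf w}([0,T]) \le \tfrac{D_{max}}{\eta_{T+1}} + \tfrac{4\phi_{max}}{\eta_T}\gamma + \tfrac{G_\ell^2}{2\sigma}\sum_t\eta_t$.

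Finally, substitute $\eta_t = \eta_0/\sqrt T$, so that $1/\eta_{T+1} = 1/\eta_T = \sqrt T/\eta_0$ and $\sum_{t=1}^T\eta_t = \eta_0\sqrt T$; collecting terms gives exactly \eqref{Eq:DynBound}, and since $D_{max},\phi_{max},G_\ell,\sigma,\eta_0$ are fixed constants, this is $O\bigl(\sqrt T\,[\,1 + \sum_t\|\mathbf M_{t+1} - \mathbf M_t\|_F\,]\bigr)$. I expect the main obstacle to be the time-varying-step telescoping: one must keep the coefficients $\tfrac{1}{\eta_{t+1}} - \tfrac{1}{\eta_t}$ nonnegative --- which is exactly where the hypothesis $\eta_{t+1}\le\eta_t$ enters --- so that the associated Bregman terms collapse to $D_{max}/\eta_{T+1}$ instead of accumulating, and one must verify that carrying the auxiliary variable $\mu$ through the same mirror step costs only the stated constant factor and introduces no path-length term beyond $\sum_t\|\mathbf M_{t+1} - \mathbf M_t\|_F$ (the comparator $\mathbf w$ being specified only through $\mathbf M$, with the $\mu$-comparator free to be taken as the running iterate). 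Everything else is a routine specialization of the bound in \cite{hall2015online}.
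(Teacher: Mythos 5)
Your proposal is correct and follows essentially the same route as the paper: the paper's (omitted) proof simply invokes the dynamic-regret theorem for COMID from \cite{hall2015online} with nonincreasing $\eta_t$, computes the problem-specific constants $G_\ell$, $\phi_{max}$, $D_{max}$ for the bounded set $\|\mathbf M\|\le c$ (e.g.\ hinge loss with $\psi=\|\cdot\|_F^2$), and substitutes $\eta_t=\eta_0/\sqrt T$, exactly as you do. Your additional sketch of the Abel-summation/telescoping argument behind that theorem is a faithful reconstruction (the factor $4$ actually arises from bounding the comparator-shift Bregman differences by $2\cdot 2\phi_{max}\|\mathbf M_{t+1}-\mathbf M_t\|$ with $\phi_{max}=\tfrac12\max\|\nabla\psi\|$, not from the joint $(\mathbf M,\mu)$ bookkeeping), but this does not change the substance.
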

Since the COMID learners have low dynamic regret, we can use OCELAD on the RICE ensemble. 
\begin{theorem}[RICE-OCELAD Strongly Adaptive Dynamic Regret]
\label{Thm:SADML}
Let $\mathbf w = \{\mathbf M_t\}_{t \in [0,T]}$ be any sequence of metrics with $\|\mathbf M_t\| \leq c$ on the interval $[0,T]$, and define $\gamma_{\mathbf w}(I) = \sum_{t \in I} \|\mathbf M_{t+1} - \mathbf M_t\|$. Let $\mathcal{B}$ be the RICE ensemble with $\eta_t (I) = \eta_0/\sqrt{|I|}$. 
Then the RICE-OCELAD metric learning algorithm (Algorithm 2) satisfies
\begin{align}
\label{Eq:saRegretML}
&R_{OCELAD,\mathbf w }(I)  \leq \\\nonumber &\frac{4}{2^{1/2} - 1} C (1 + \gamma_{\mathbf{w}}(I) )  \sqrt{|I|} + 40 \log (s+1)\sqrt{|I|},
\end{align}
for every subinterval $I = [q,s] \subseteq [0, T]$ simultaneously. $C$ is a constant, and the expectation is with respect to the random output of the algorithm. 
\end{theorem}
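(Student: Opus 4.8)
The statement is a corollary of the two results already in hand, so the plan is: (i) verify that the RICE ensemble $\mathcal{B}$ satisfies the per-interval hypothesis \eqref{Eq:AlgCond} of Theorem~\ref{Thm:SAOL}, with some explicit constant; (ii) invoke Theorem~\ref{Thm:SAOL}; (iii) consolidate constants to reach \eqref{Eq:saRegretML}. Two facts are used throughout and would be stated first: the assumptions made just before Corollary~\ref{Cor:DynReg} ($\|\mathbf M_t\|\le c$, bounded loss and gradient) keep the quantities $D_{max},\phi_{max},G_\ell$ of Corollary~\ref{Cor:DynReg} finite and also legitimize the loss rescaling in the weight update \eqref{eq:estreg}; and the additive $40\log(s+1)\sqrt{|I|}$ term, being the multiplicative-weights cost of tracking the $O(\log t)$ active learners, is inherited verbatim from Theorem~\ref{Thm:SAOL}, so only the leading coefficient needs work. (The expectation in the statement is vestigial from the randomized-selection ensemble method; with the convex combination \eqref{Eq:Select} the estimate is deterministic given the data, so the bound in fact holds surely.)

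\textbf{Step 1: each RICE learner obeys \eqref{Eq:AlgCond}.} Fix a dyadic interval $I=[q,s]$ of length $|I|=I_0 2^j$ and its COMID learner $\mathcal{B}_I$. The first task is to unfold the retro-initialization: $\mathcal{B}_I$ runs with the constant rate $\eta_0/\sqrt{|I|}$ but is warm-started from the level-$(j-1)$ learner on the length-$|I|/2$ interval immediately preceding $I$, which is itself warm-started from a level-$(j-2)$ learner on the length-$|I|/4$ interval before that, and so on. Concatenating this chain, $\mathcal{B}_I$-with-warm-up is a single execution of the COMID recursion \eqref{Eq:COMID} on an enlarged interval $\tilde I\supseteq I$ of length $I_0(2^{j+1}-1)<2|I|$, whose learning rate is piecewise constant, equal to $\eta_0/\sqrt{I_0 2^k}$ on the length-$I_0 2^k$ segment, hence \emph{non-increasing} in time and equal to $\eta_0/\sqrt{|I|}$ on $I$ itself. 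This is exactly the $\eta_{t+1}\le\eta_t$ regime of the first bound in Corollary~\ref{Cor:DynReg}; applied on $\tilde I$ it gives
\[
R_{\mathcal{B}_I,\mathbf w}(\tilde I)\;\le\;\frac{D_{max}}{\eta_{s+1}}+\frac{4\phi_{max}}{\eta_s}\,\gamma_{\mathbf w}(\tilde I)+\frac{G_\ell^2}{2\sigma}\sum_{t\in\tilde I}\eta_t .
\]
Here $\eta_s=\eta_{s+1}=\eta_0/\sqrt{|I|}$, so the first two terms are $O(\sqrt{|I|})$ and $O\!\left(\gamma_{\mathbf w}(\tilde I)\sqrt{|I|}\right)$; the step-size sum telescopes as a geometric series, $\sum_{k=0}^{j}(I_0 2^k)\,\eta_0/\sqrt{I_0 2^k}=\eta_0\sqrt{I_0}\sum_{k=0}^{j}2^{k/2}\le \frac{2^{1/2}}{2^{1/2}-1}\,\eta_0\sqrt{|I|}$, which is where the $\tfrac{1}{2^{1/2}-1}$ factor enters. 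Equivalently, running COMID directly on $I$ from the warm start $\hat{\mathbf M}_q$, the telescoping leaves an initialization term $B_\psi(\mathbf M_q,\hat{\mathbf M}_q)/\eta_q$ that the retro-initialization chain bounds (in place of the cruder cold-start penalty $D_{max}/\eta_q$) by precisely this geometric sum over the coarser learners. Either way I would conclude a per-interval estimate $R_{\mathcal{B}_I,\mathbf w}(I)\le C'(1+\gamma_{\mathbf w}(I))\sqrt{|I|}$, with $C'$ a fixed multiple (of order $\tfrac{1}{2^{1/2}-1}$) of the Corollary~\ref{Cor:DynReg} constant, after charging the warm-up variation either to $\gamma_{\mathbf w}$ on the slightly enlarged dyadic interval or to the dyadic-covering step inside the proof of Theorem~\ref{Thm:SAOL}.

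\textbf{Step 2: apply Theorem~\ref{Thm:SAOL} and consolidate.} With \eqref{Eq:AlgCond} verified for the RICE ensemble with constant $C'$, Theorem~\ref{Thm:SAOL} immediately yields $R_{OCELAD,\mathbf w}(I)\le 8C'(1+\gamma_{\mathbf w}(I))\sqrt{|I|}+40\log(s+1)\sqrt{|I|}$ on every subinterval $I=[q,s]\subseteq[0,T]$ simultaneously; substituting the value of $C'$ from Step~1 turns $8C'$ into $\frac{4}{2^{1/2}-1}C$, giving \eqref{Eq:saRegretML}.

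\textbf{Main obstacle.} Everything hard is in Step~1: making rigorous the identification ``retro-initialization chain $=$ one long COMID run with a quantized $1/\sqrt{t}$ rate,'' so that the potentially large cold-start term $D_{max}/\eta$ is replaced by the convergent geometric sum of short-interval contributions; correctly accounting for the comparator variation $\gamma_{\mathbf w}$ over the warm-up segments, which live on an interval disjoint from $I$ (a mild enlargement of the interval asserted in \eqref{Eq:AlgCond}, or a small adjustment to the covering argument in Theorem~\ref{Thm:SAOL}, absorbs this); and tracking the geometric constant $\tfrac{2^{1/2}}{2^{1/2}-1}$ through the factor of $8$ in Theorem~\ref{Thm:SAOL} so that it emerges as exactly $\tfrac{4}{2^{1/2}-1}$. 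Once the per-interval bound with the right constant is secured, Steps~1 and~2 are direct appeals to Corollary~\ref{Cor:DynReg} and Theorem~\ref{Thm:SAOL} respectively.
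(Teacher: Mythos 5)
Your high-level skeleton is the same as the paper's: show each RICE/COMID learner satisfies the per-interval condition \eqref{Eq:AlgCond} via Corollary~\ref{Cor:DynReg}, then invoke the general OCELAD bound of Theorem~\ref{Thm:SAOL}. But the detailed content of your Step~1 is both unnecessary and, as stated, flawed. Because the comparator and iterates are constrained to the bounded set $\|\mathbf M\|\le c$, the quantity $D_{max}$ in Corollary~\ref{Cor:DynReg} bounds the Bregman divergence from \emph{any} initialization, so applying the corollary to the learner $\mathcal{B}_I$ on its own interval $I$ with the constant rate $\eta_0/\sqrt{|I|}$ already gives $R_{\mathcal{B}_I,\mathbf w}(I)\le C(1+\gamma_{\mathbf w}(I))\sqrt{|I|}$ --- the ``cruder cold-start penalty'' $D_{max}/\eta$ you dismiss is exactly $O(\sqrt{|I|})$ and is what the paper uses; the retro-initialization is a practical warm-start device and plays no role in the regret proof. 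Your preferred route through the enlarged interval $\tilde I$ does not deliver \eqref{Eq:AlgCond}: a dynamic-regret bound on $\tilde I$ does not imply one on $I\subseteq\tilde I$ (regret against an arbitrary comparator can be negative on the warm-up segment), and the variation term you obtain is $\gamma_{\mathbf w}(\tilde I)$, which is not controlled by the $\gamma_{\mathbf w}(I)$ that \eqref{Eq:AlgCond} requires; you flag this but never actually absorb it.

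The constant bookkeeping is also misattributed. The factor $\frac{1}{2^{1/2}-1}$ in \eqref{Eq:saRegretML} does not come from a geometric sum of RICE learning rates over the warm-up chain; it is the covering constant from the dyadic-partition lemma used inside the proof of Theorem~\ref{Thm:SAOL}, where $\sum_i\sqrt{|I_i|}\le\frac{\sqrt 2}{\sqrt 2-1}\sqrt{|I|}$ on each side of the partition (the ``$8$'' in Theorem~\ref{Thm:SAOL} is simply this constant rounded). Consequently your final step --- folding an extra $\frac{1}{2^{1/2}-1}$ into $C'$ and then also applying the factor $8$ from Theorem~\ref{Thm:SAOL}, claiming $8C'$ ``becomes'' $\frac{4}{2^{1/2}-1}C$ --- double-counts and only goes through by silently redefining $C$. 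The correct and much shorter argument is: Corollary~\ref{Cor:DynReg} with $\eta_t=\eta_0/\sqrt{|I|}$ gives \eqref{Eq:AlgCond} with $C$ determined by $D_{max},\phi_{max},G_\ell,\sigma,\eta_0$ (finite by the boundedness assumptions), and the covering argument of Theorem~\ref{Thm:SAOL}, kept with its unrounded constant, yields \eqref{Eq:saRegretML} directly. Your side remark that the expectation is vacuous for the convex-combination estimator \eqref{Eq:Select} is correct and consistent with the paper.
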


\section{Results} \label{sec:results}

\subsection{Synthetic Data}


We run our metric learning algorithms on a synthetic dataset undergoing different types of simulated metric drift. We create a synthetic 2000 point dataset with 2 independent 50-20-30\% clusterings (A and B) in disjoint 3-dimensional subspaces of $\mathbb{R}^{25}$. The clusterings are formed as 3-D Gaussian blobs, and the remaining 19-dimensional subspace is filled with iid Gaussian noise.


We create a scenario exhibiting nonstationary drift, combining continuous drifts and shifts between the two clusterings (A and B). To simulate continuous drift, at each time step we perform a small random rotation of the dataset. The drift profile is shown in \ref{Fig:None1}. For the first interval, partition A is used and the dataset is static, no drift occurs. Then, the partition is changed to B, followed by an interval of first moderate, then fast, and then moderate drift. Finally, the partition reverts back to A, followed by slow drift.

\begin{figure*}[htb]
\centering
\includegraphics[width=7.25in]{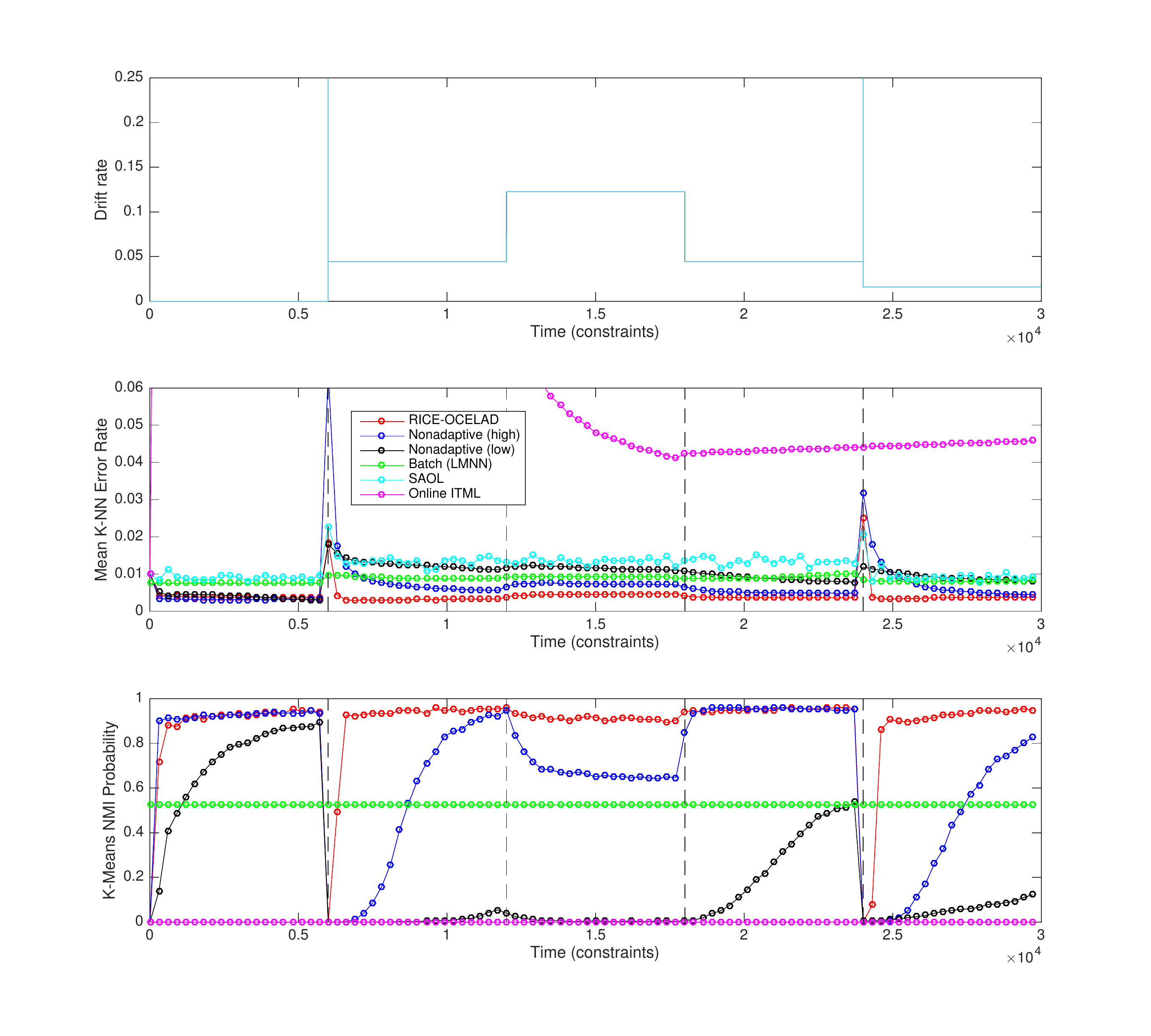}
\caption{
Tracking of a changing metric. Top: Rate of change (scaled Frobenius norm per tick) of the generating metric as a function of time. The large changes result from a change in clustering labels. Metric tracking performance is computed for RICE-OCELAD (adaptive), nonadaptive COMID \cite{kunapuli2012mirror} (high learning rate), nonadaptive COMID (low learning rate), the batch solution (LMNN) \cite{weinberger2005distance}, SAOL \cite{daniely2015strongly} and online ITML \cite{davis2007information}, averaged over 3000 random trials. Shown as a function of time is the mean k-NN error rate (middle) and the probability that the k-means NMI exceeds $0.8$ (bottom). 
Note that RICE-OCELAD alone is able to effectively adapt to the variety of discrete changes and changes in drift rate, and that for NMI ITML and SAOL fail completely.}
\label{Fig:None1}
\end{figure*}

\begin{figure}[htb]
\centering
\subfigure[OCELAD]{
\includegraphics[width=3.4in]{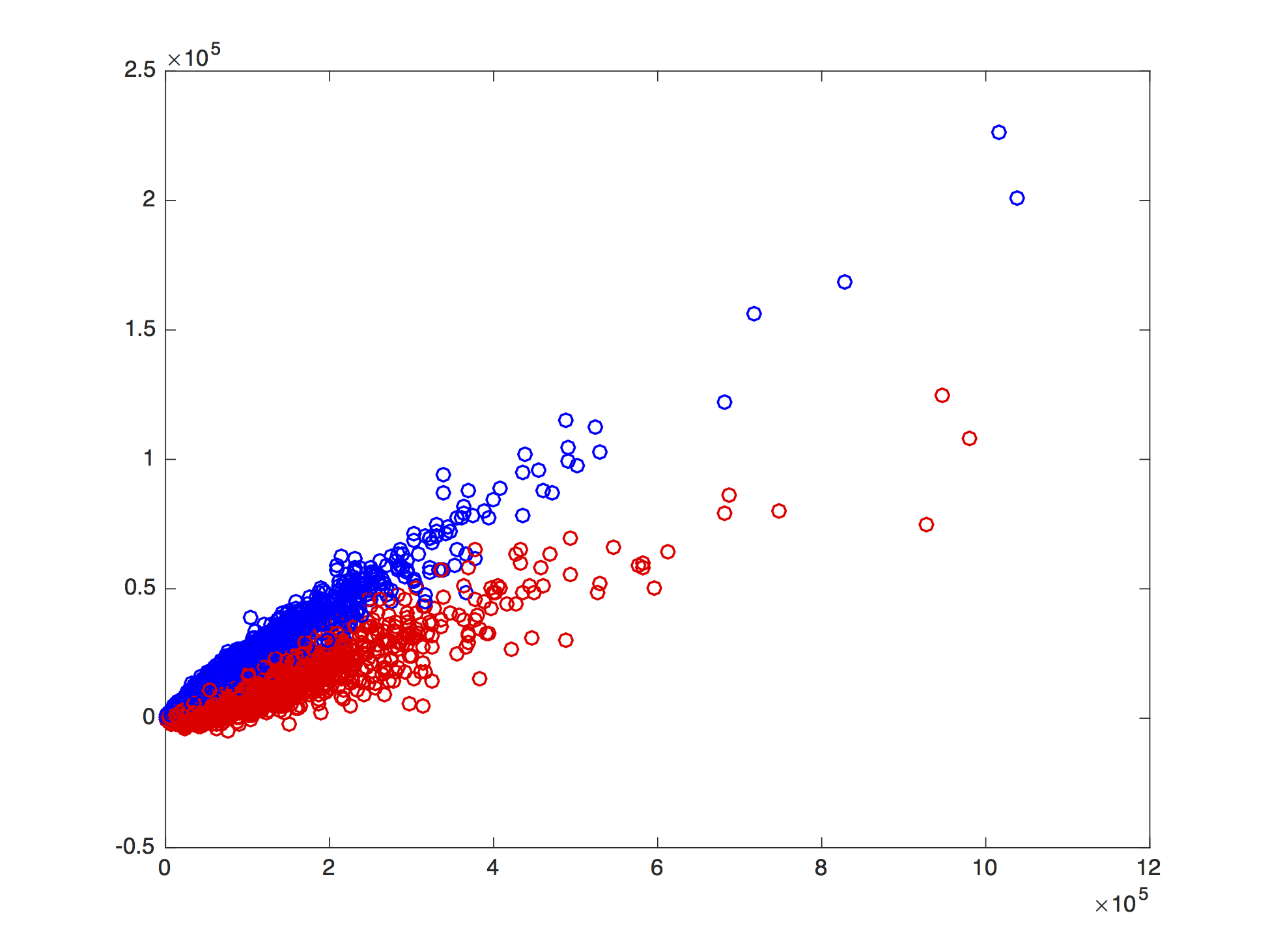}
}
\subfigure[PCA]{
\includegraphics[width=3.4in]{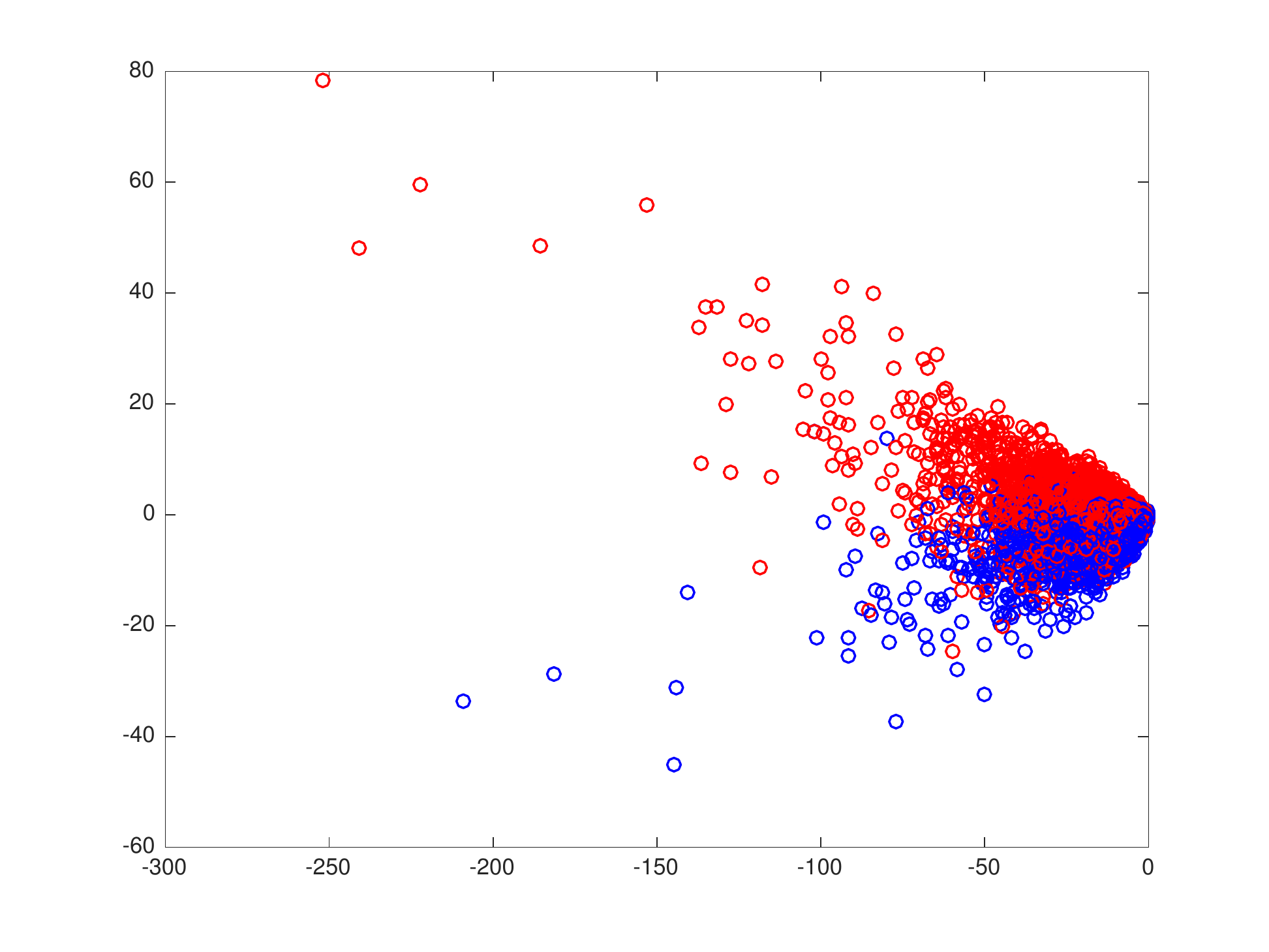}
}
\caption{Metric learning for product type clustering. Book reviews blue, electronics reviews red. Original LOO k-NN error rate 15.3\%. Top: First two dimensions of learned RICE-OCELAD embedding (LOO k-NN error rate 11.3\%). Bottom: embedding from PCA (k-NN error 20.4\%). Note improved separation of the clusters using RICE-OCELAD (cleaner border). }
\label{Fig:MLBooks}
\end{figure}

\begin{figure}[htb]
\centering
\subfigure[OCELAD]{
\includegraphics[width=3.4in]{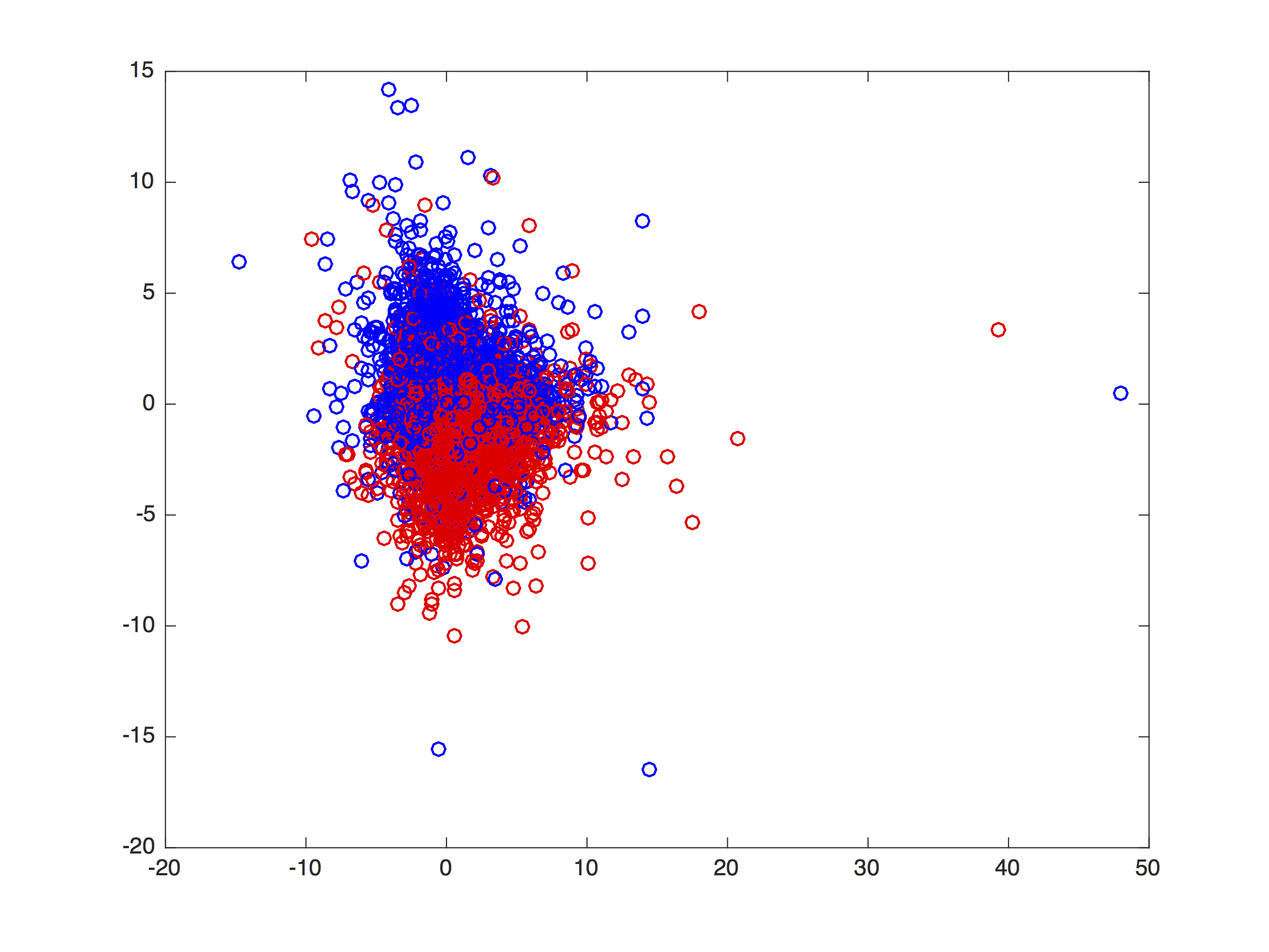}}
\subfigure[PCA]{
\includegraphics[width=3.4in]{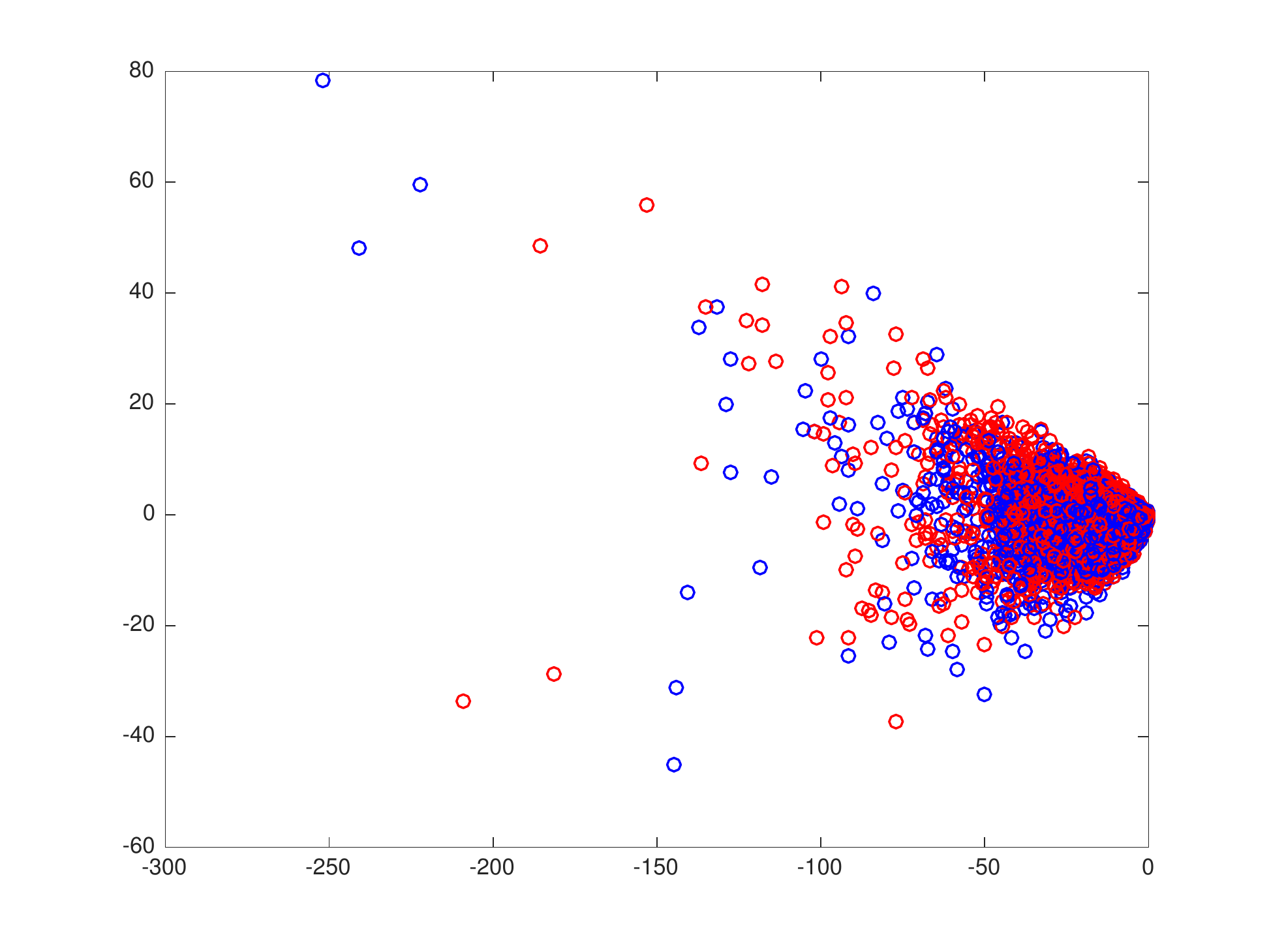}
}
\caption{Metric learning for sentiment clustering. Positive reviews blue, negative red. Original LOO k-NN error rate 35.7\%. Top: First two dimensions of learned RICE-OCELAD embedding (LOO k-NN error rate 23.5\%). Bottom: embedding from PCA (k-NN error 41.9\%). Note improved separation of the clusters using RICE-OCELAD. }
\label{Fig:MLStars}
\end{figure}




\begin{figure}[htb]
\centering
\includegraphics[width=3.4in]{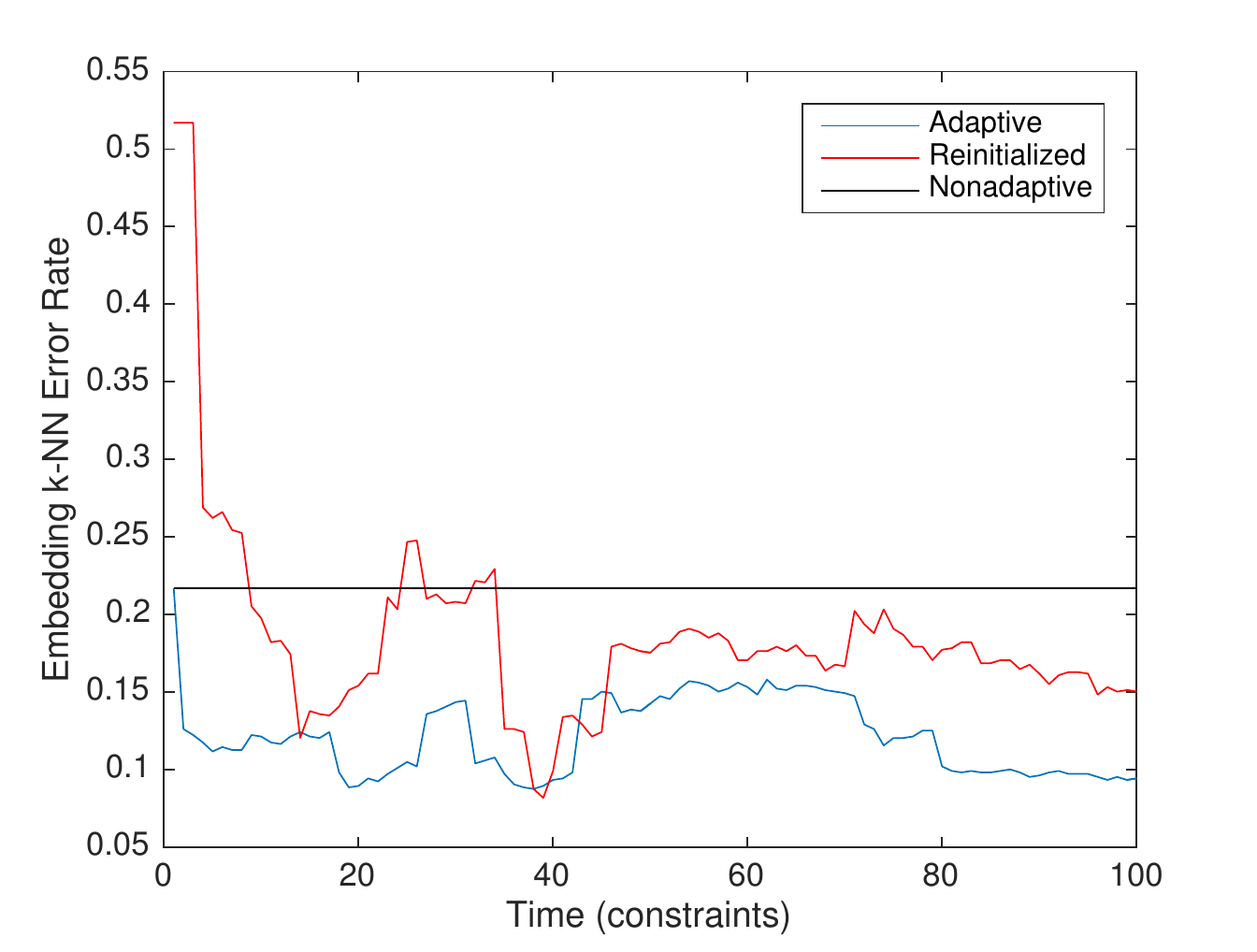}\\
\includegraphics[width=3.4in]{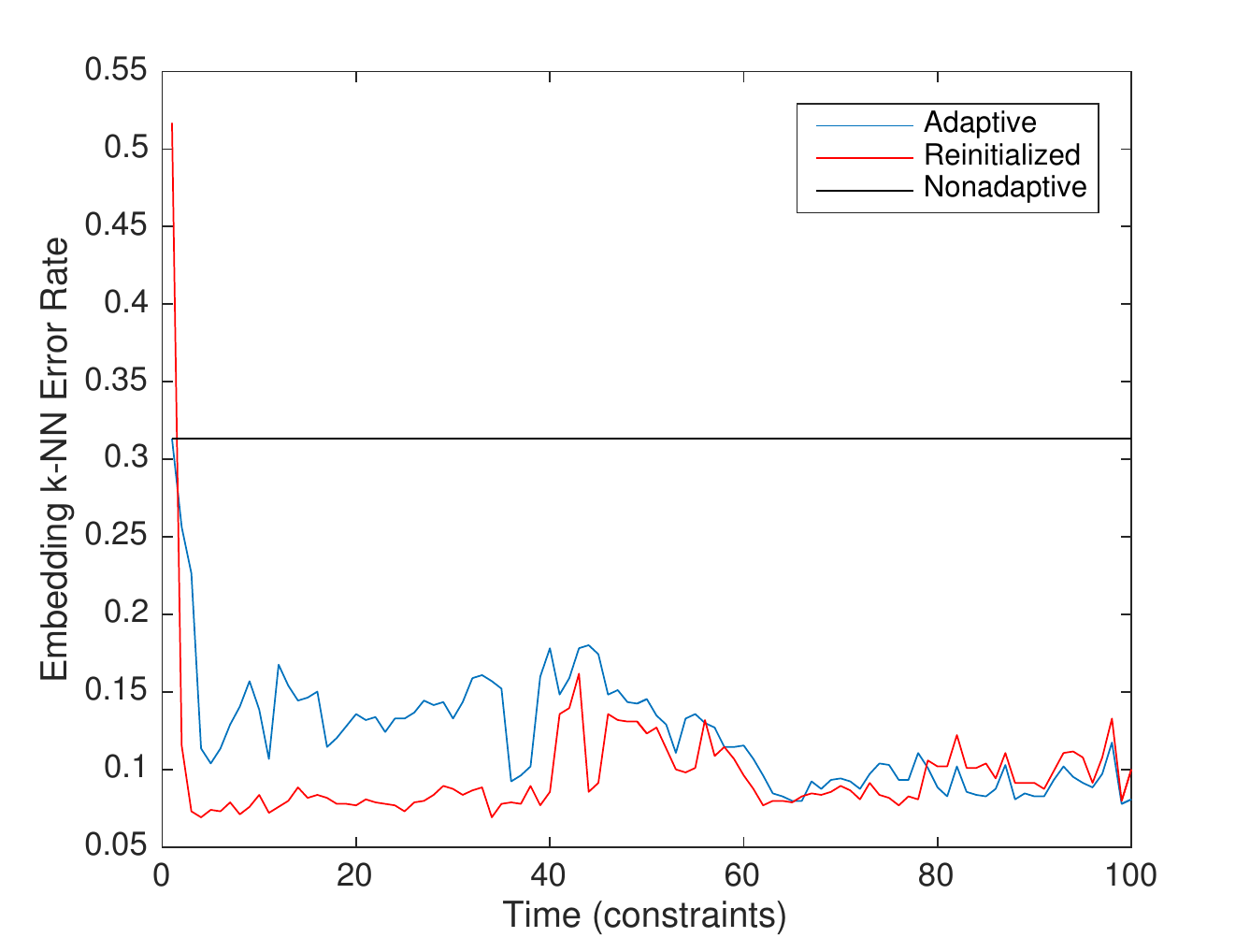}
\caption{Metric drift in Amazon review data. Left: Change from product type + sentiment clustering to simply product type; Right: Change from sentiment to product type clustering. The proposed OCELAD adapts to changes, tracking the clusters as they evolve. The oracle reinitialized mirror descent method (COMID) learner has higher tracking error and the nonadaptive learner (straight line) does not track the changes at all. }
\label{Fig:RealChangeBoth}
\end{figure}

We generate a series of $T$ constraints from random pairs of points in the dataset, incorporating the simulated drift, running each experiment with 3000 random trials. For each experiment conducted in this section, we evaluate performance using two metrics. 
We plot the K-nearest neighbor error rate, using the learned embedding at each time point, averaging over all trials. We quantify the clustering performance by plotting the empirical probability that the normalized mutual information (NMI) of the K-means clustering of the unlabeled data points in the learned embedding at each time point exceeds 0.8 (out of a possible 1). We believe clustering NMI, rather than k-NN performance, is a more realistic indicator of metric learning performance, at least in the case where finding a relevant embedding is the primary goal.


In our results, we consider RICE-OCELAD, SAOL with COMID \cite{daniely2015strongly}, nonadaptive COMID \cite{kunapuli2012mirror}, LMNN (batch) \cite{weinberger2005distance}, and online ITML \cite{davis2007information}. 

For RICE-OCELAD, we set the base interval length $I_0 = 1$ throughout, and set $\eta_0$ via cross-validation in a scenario with no drift. All parameters for the other algorithms were set via cross validation, so as to err on the side of optimism in a truly online scenario. For nonadaptive COMID, we set the high learning rate using cross validation for moderate drift, and we set the low learning rate via cross validation in the case of no drift. The results are shown in Figure \ref{Fig:None1}. Online ITML fails due to its bias agains low-rank solutions \cite{davis2007information}, and the batch method and low learning rate COMID fail due to an inability to adapt. The high learning rate COMID does well at first, but as it is optimized for slow drift it cannot adapt to the changes in drift rate as well or recover quickly from the two partition changes. SAOL, as it is designed for mildly-varying bounded loss functions without slow drift and does not use retro-initialized learners, completely fails in this setting (zero probability of NMI > .8 throughout). RICE-OCELAD, on the other hand, adapts well throughout the entire interval, as predicted by the theory.

\subsection{Clustering Product Reviews}


As an example real data task, we consider clustering Amazon text reviews, using the Multi-Domain Sentiment Dataset \cite{blitzer2007biographies}. We use the 11402 reviews from the Electronics and Books categories, and preprocess the data by computing word counts for each review and 2369 commonly occurring words, thus creating 11402 data points in $\mathbb{R}^{2369}$. Two possible clusterings of the reviews are considered: product category (books or electronics) and sentiment (positive: star rating 4/5 or greater, or negative: 2/5 or less).

Figures \ref{Fig:MLBooks} and \ref{Fig:MLStars} show the first two dimensions of the embeddings learned by static COMID for the category and sentiment clusterings respectively. Also shown are the 2-dimensional standard PCA embeddings, and the k-NN classification performance both before embedding and in each embeddings. As expected, metric learning is able to find embeddings with improved class separability. We emphasize that while improvements in k-NN classification are observed, we use k-NN merely as a way to quantify the separability of the classes in the learned embeddings. In these experiments, we set the regularizer $r(\cdot)$ to the elementwise L1 norm to encourage sparse features.

We then conducted drift experiments where the clustering changes. The change happens after the metric learner for the original clustering has converged, hence the nonadaptive learning rate is effectively zero. For each change, we show the k-NN error rate in the learned RICE-OCELAD embedding as it adapts to the new clustering. Emphasizing the visualization and computational advantages of a low-dimensional embedding, we computed the k-NN error after projecting the data into the first 5 dimensions of the embedding. Also shown are the results for a learner where an oracle allows reinitialization of the metric to the identity at time zero, and the nonadaptive learner for which the learning rate is not increased. Figure \ref{Fig:RealChangeBoth} (left) shows the results when the clustering changes from the four class sentiment + type partition to the two class product type only partition, and Figure \ref{Fig:RealChangeBoth} (right) shows the results when the partition changes from sentiment to product type. In the first case, the similar clustering allows RICE-OCELAD to significantly outperform even the reinitialized method, and in the second remain competitive where the clusterings are unrelated.


\section{Conclusion and Future Work}\label{sec:conclusion}

Learning a metric on a complex dataset enables both unsupervised methods and/or a user to home in on the problem of interest while de-emphasizing extraneous information. When the problem of interest or the data distribution is nonstationary, however, the optimal metric can be time-varying. We considered the problem of tracking a nonstationary metric and presented an efficient, strongly adaptive online algorithm (OCELAD), that combines the outputs of any black box learning ensemble (such as RICE), and has strong theoretical regret guarantees. Performance of our algorithm was evaluated both on synthetic and real datasets, demonstrating its ability to learn and adapt quickly in the presence of changes both in the clustering of interest and in the underlying data distribution. 

Potential directions for future work include the learning of more expressive metrics beyond the Mahalanobis metric, the incorporation of unlabeled data points in a semi-supervised learning framework \cite{bilenko2004integrating}, and the incorporation of an active learning framework to select which pairs of data points to obtain labels for at any given time \cite{settles2012active}.

\bibliographystyle{IEEETran}
\bibliography{metric_learning}

%
%
%
%
%
%
%
%
%
%
%
%

\end{document}